\def \TRkeywords{Distance Metric Learning, Kernel Methods, Reproducing Kernel Hilbert Space for Vector-valued Functions}
\begin{document}

\maketitle

\ifMakeReviewDraft
	\linenumbers
\fi

\begin{abstract}
In this paper we present two related, kernel-based \ac{DML} methods. Their respective models non-linearly map data from their original space to an output space, and subsequent distance measurements are performed in the output space via a Mahalanobis metric. The dimensionality of the output space can be directly controlled to facilitate the learning of a low-rank metric. Both methods allow for simultaneous inference of the associated metric and the mapping to the output space, which can be used to visualize the data, when the output space is 2- or 3-dimensional. Experimental results for a collection of classification tasks illustrate the advantages of the proposed methods over other traditional and kernel-based \ac{DML} approaches.
\end{abstract}


\vskip 0.5in
\noindent
{\bf Keywords:} \TRkeywords

\section{Introduction}
\label{sec:Introduction}

\acresetall

\ac{DML} has become an active research area due to the fact that many machine learning models and algorithms depend on metric calculations. Considering plain Euclidean distances between samples may not be a suitable approach for some practical problems, \eg, for \ac{KNN} classification, where a metric other than the Euclidean may yield higher recognition rates. Hence, it may be important to learn an appropriate metric for the learning problem at hand. \ac{DML} aims to address this problem, \ie, to infer a parameterized metric from the available training data that maximizes the performance of a model. 

Most of past \ac{DML} research focuses specifically on learning a weighted Euclidean metric, also known as the Mahalanobis distance (\eg\ see \cite{Maesschalck2000}), or generalizations of it, where the weights are inferred from the data. For elements $\boldsymbol{x}, \boldsymbol{x}'$ of a finite-dimensional Euclidean space $\mathbb{R}^m$, the Mahalanobis distance is defined as $d ( \boldsymbol{x},\boldsymbol{x}' ) =  \| \boldsymbol{x} - \boldsymbol{x}' \|_{\boldsymbol{A}} \triangleq \sqrt{(\boldsymbol{x} - \boldsymbol{x}')^T \boldsymbol{A} (\boldsymbol{x} - \boldsymbol{x}')}$, where $\boldsymbol{A}=\boldsymbol{A}^T \succeq \boldsymbol{0}$, \ie\ $\boldsymbol{A} \in \mathbb{R}^{m \times m}$ is symmetric positive semi-definite matrix of weights to be determined. Note that when $\boldsymbol{A}$ is not strictly positive definite, it defines a pseudo-metric in $\mathbb{R}^m$. An obvious \ac{DML} approach is to learn this metric in the data's native space, which is tantamount to first linearly transforming the data via a matrix $\boldsymbol{L}$, such that $\boldsymbol{A}=\boldsymbol{L}^T \boldsymbol{L}$, and then measuring distances using the standard Euclidean metric $\left \| \cdot \right \|_2$. 

One possible alternative worth exploring is to search for a non-linear transform prior to measuring Mahalanobis distances, so that performance may improve over the case, where a linear transformation is used. Towards this end, efforts have been recently made to develop kernel-based \ac{DML} approaches. If $\mathcal{X}$ is the original (native) data space, most of these methods choose an appropriate (positive definite) scalar kernel $k: \mathcal{X} \times \mathcal{X} \rightarrow \mathbb{R}$, which gives rise to a \ac{RKHS} $\mathcal{H}$ of functions $f: \mathcal{X} \rightarrow \mathbb{R}$ with inner product $\langle \cdot, \cdot \rangle_{\mathcal{H}}$. This inner product satisfies the (reproducing) property that, for any $x, x' \in \mathcal{X}$, there are functions $\phi_x, \phi_{x'} \in \mathcal{H}$, such that $\langle \phi_x, \phi_{x'}\rangle_{\mathcal{H}} = k ( x, x' )$. The mapping $\phi : x \mapsto \phi_x$ is referred to as the \emph{feature map} and $\mathcal{H}$ is referred to as the (transformed) \emph{feature space} of $\mathcal{X}$, both of which are implied by the chosen kernel. Notice that the feature map may be highly non-linear. Subsequently, these methods learn a metric in the feature space $\mathcal{H}$: $d ( \phi_x, \phi_{x'} ) = \sqrt{\langle (\phi_x - \phi_{x'}), A (\phi_x - \phi_{x'})\rangle_{\mathcal{H}}}$, where $A: \mathcal{H} \rightarrow \mathcal{H}$ is a self-adjoint, bounded, positive-definite operator, preferably, of low rank. Since any element $\phi_x$ of $\mathcal{H}$ may be of infinite dimension, operator $A$ may be described by an infinite number of parameters to be inferred from the data. Obviously, learning $A$ is not feasible by following direct approaches and, therefore, needs to be learned in some indirect fashion. For example, the authors in \cite{Jain2010} pointed out an equivalence between kernel learning and metric learning in the feature space. In specific, they showed that learning $A$ in $\mathcal{H}$ is implicitly achieved by learning a finite-dimensional matrix. 

In this paper, we propose a different \ac{DML} kernelization strategy, according to which a kernel-based, non-linear transform $f$ maps $\mathcal{X}$ into a Euclidean output space $\mathbb{R}^m$, in order to learn a Mahalanobis distance in that output space. This strategy gives rise to two new models that simultaneously learn both the mapping and the output space metric. Leveraged by the Representer Theorem proposed in \cite{Micchelli2005}, all computations of both methods involve only kernel calculations. Unlike previous kernel-based approaches, whose mapping from input to feature space $\mathcal{H}$ cannot be cast into an explicit form, the relevant mappings from input to output space are explicit for both of our methods. Thus, we can access the transformed data in the output space, and this feature can be even used to visualize the data \cite{VanderMaaten2008}, when the output space is $2$- or $3$-dimensional. Furthermore, by specifying the dimensionality of the output space, the rank of the learned metric can be easily controlled to facilitate dimensionality reduction of the original data. 

Our first approach uses an appropriate, but otherwise arbitrary, matrix-valued kernel function and, hence, provides maximum flexibility in specifying the mapping $f$. Furthermore, in this approach, Mahalanobis distances are explicitly parameterized by a weight matrix to be learned. Our second method is similar to the first one, but assumes a specific parameterized matrix-valued kernel function that can be inferred from the data. We show that the Mahalanobis distance is implicitly determined by the kernel function, and thus eliminates the need of learning a weight matrix for the Mahalanobis distances. To demonstrate the merit of our methods, we compare them to standard $k$-NN classification (without \ac{DML}) and other recent kernelized \ac{DML} algorithms, including Large Margin Nearest Neighbor (LMNN) \cite{Weinberger2009}, Information-Theoretic Metric Learning (ITML) \cite{Davis2007} and kernelized LMNN (KLMNN) \cite{Chatpatanasiri2010}. The comparisons are drawn using eight UCI benchmark data sets in terms of recognition performance and show that the novel methods can achieve higher classification accuracy. 
 
\textbf{Related Work} Several previous works have been focused on \ac{DML}. Xing, et. al. \cite{Xing2002} proposed an early \ac{DML} method, which minimizes the distance between similar points, while enlarging the distance between dissimilar points. In \cite{Schultz2004}, relative comparison constraints that involve three points at a time are considered. Neighborhood Components Analysis (NCA) \cite{Goldberger2004} is proposed to learn a Mahalanobis distance for the $k$-NN classifier by maximizing the leave-one-out $k$-NN performance. \cite{Bilenko2004} proposed a \ac{DML} method for clustering. Large Margin Nearest Neighbor (LMNN) \ac{DML} model \cite{Weinberger2009} aims to produce a mapping, so that the $k$-nearest neighbors of any given sample belong to the same class, while samples from different classes are separated by large margins. Similarly, a Support Vector-based method is proposed in \cite{Nguyen2008}. Also, LMNN is further extended to a Multi-Task Learning variation \cite{Parameswaran2010}. Another multi-task \ac{DML} model is proposed in \cite{Zhang2010} that searches for task relationships. In \cite{Huang2009}, the authors proposed a general framework for sparse \ac{DML}, such that several previous works are subsumed. Also, some other \ac{DML} models can be extended to sparse versions by augmenting their formulations. Recently, an eigenvalue optimization framework for \ac{DML} was developed an presented in \cite{Ying2012}. Moreover, the connection between LMNN and Support Vector Machines (SVMs) was discussed in \cite{Do2012}.

Besides the problem of learning a metric in the original feature space, there has been increasing interest in kernelized \ac{DML} methods. In the early work of \cite{Tsang2003}, the Lagrange dual problem of the proposed \ac{DML} formulation is derived, and the \ac{DML} method is kernelized in the dual domain. Information-Theoretic Metric Learning (ITML) \cite{Davis2007} is another kernelized method, which is based on minimizing the Kullback-Leibler divergence between two distributions. The kernelization of LMNN is discussed in \cite{Torresani2007} and \cite{Kulis2009}. Moreover, a Kernel Principal Component Analysis (KPCA)-based kernelized algorithm is developed in \cite{Chatpatanasiri2010}, such that many \ac{DML} methods, such as LMNN, can be kernelized. In \cite{Lu2009}, the Mahalanobis matrix and kernel matrix are learned simultaneously. In \cite{Jain2010} and its extended work \cite{Jain2012}, the authors proposed a framework that builds connections between kernel learning and \ac{DML} in the kernel-induced feature space. Several kernelized models, such as ITML, are covered by this framework. Finally, \ac{MKL}-based metric \ac{DML} is discussed in \cite{Wang2011}.





\section{RKHS for Vector-Valued Functions}
\label{sec:RKHS}
Before introducing our methods, in this section we will briefly review the concept of Reproducing Kernel Hilbert Space (RKHS) for vector-valued functions as presented in \cite{Micchelli2005}. Let $\mathcal{X}$ be an arbitrary set, which we will refer to as \emph{input space}, although it may not actually be a vector space per se. A matrix function $\boldsymbol{K}:\mathcal{X} \times \mathcal{X} \rightarrow  \mathbb{R}^{m \times m}$ is called a \emph{positive-definite matrix-valued kernel}, or simply \emph{matrix kernel}, iff it satisfies the following conditions:

\begin{align}
\label{eq:kernel_condition_symmetry}
& \boldsymbol{K} ( x,x' ) = \boldsymbol{K}^T ( x',x ) \; \; \; &\forall \; x,x' \in \mathcal{X} \\
\label{eq:kernel_condition_pd1}
& \boldsymbol{K}( x,x ) \succeq 0 \; \; \; &\forall \; x \in \mathcal{X} \\
\label{eq:kernel_condition_pd2}
& \bar{\boldsymbol{K}} ( X ) \succeq 0 \; \; \; &\forall \; X \subseteq \mathcal{X} 
\end{align}

\noindent
where $X = \{ x_i \}_{i=1}^{n}$ and $\bar{\boldsymbol{K}} ( X ) \in \mathbb{R}^{mn \times mn}$ is a $n \times n$ block matrix, whose $(i,j)$ block is given as $\bar{\boldsymbol{K}}_{i,j} = \boldsymbol{K}( x_i, x_j ) \in \mathbb{R}^{m \times m}$, where $i,j \in \{1, 2, \ldots, n\}$. According to \cite[Theorem 1]{Micchelli2005}, if $\boldsymbol{K}$ is a matrix kernel, then there exists a unique (up to an isometry) \ac{RKHS} $\mathcal{H}$ of vector-valued functions $f: \mathcal{X} \rightarrow \boldsymbol{R}^m$ equipped with an inner product $ \langle \cdot, \cdot \rangle_{\mathcal{H}}$ that admits $\boldsymbol{K}$ as its reproducing kernel, \ie\, $\forall \; x, x' \in \mathcal{X}$ and $\forall \; \boldsymbol{y}, \boldsymbol{y}' \in \mathbb{R}^m$, there are vector-valued functions $K_{x}\boldsymbol{y}, \; K_{x'}\boldsymbol{y}' \in \mathcal{H}$ that depend on $x, \boldsymbol{y}$ and $x', \boldsymbol{y}'$ respectively, such that it holds

\begin{equation}
	\label{eq:reproducing_property}
	\langle K_{x}\boldsymbol{y}, K_{x'}\boldsymbol{y}' \rangle_{\mathcal{H}} = \boldsymbol{y}^T \boldsymbol{K}( x, x' ) \boldsymbol{y}'
\end{equation}

Note that $K_{x}: \mathbb{R}^m \rightarrow \mathcal{H}$ is a bounded linear operator parameterized by $x \in \mathcal{X}$ and that the function $K_{x}\boldsymbol{y} \in \mathcal{H}$ is such that, when evaluated on $x' \in \mathcal{X}$, it yields

\begin{equation}
	\label{eq:operator_point_evaluation}
	( K_{x}\boldsymbol{y} )(x') = \boldsymbol{K}( x', x ) \boldsymbol{y}
\end{equation}

\section{Fixed Matrix Kernel \ac{DML} Formulation}
\label{sec:Sub_Problem_Formulation}

In this section, we propose our first kernelized \ac{DML} method based on a RKHS for vector-valued functions. Again, let $\mathcal{X}$ be an arbitrary set. Assume we are provided with a training set $\mathcal{T}= \{ ( x_i, \boldsymbol{y}_i ) \}_{i=1,\cdots, n}$, where $x_i \in \mathcal{X}$ and $\boldsymbol{y}_i \in \mathbb{R}^m$, and we are considering the supervised learning task that seeks to infer a distance metric in $\mathbb{R}^m$ along with a mapping $f: \mathcal{X} \mapsto \mathbb{R}^m$ from $\mathcal{T}$. In addition to $\mathcal{T}$, we also assume that we are provided with a real-valued, symmetric \emph{similarity matrix} $\boldsymbol{S} \in \mathbb{R}^{n \times n}$ with entries $s_{i,j} = s(\boldsymbol{y}_i, \boldsymbol{y}_j)$, where $s: \mathbb{R}^m \times \mathbb{R}^m \rightarrow \mathbb{R}_{+}$ is such that $0 \leq s_{i,j} \leq s_{i,i} \; \; \forall \; i,j$. Other than these constraints, the values $s_{i,j}$ can be arbitrary and assigned appropriately with respect to a specific application context. Moreover, let $\boldsymbol{K}\left(x, x'\right)$ be a matrix-valued kernel function (\ie, it satisfies \eref{eq:kernel_condition_symmetry} through \eref{eq:kernel_condition_pd2}) on $\mathcal{X}$ of given form and let $\mathcal{H}$ be its associated \ac{RKHS} of $\mathbb{R}^m$-valued elements. Consider now the following \ac{DML} formulation:

\begin{equation}
\label{eq:formulation_general}
\min_{f, \boldsymbol{L}} \;  \frac{\gamma}{2} \sum_{i, j} s_{i,j}  \| \boldsymbol{L} [ f( x_i ) - f ( x_j ) ] \|_2^2 + \frac{\lambda}{2} \sum_{i} \|  \boldsymbol{L} [ f( x_i ) - \boldsymbol{y}_i ] \|_2^2 + \rho \;\mathrm{tr} ( \boldsymbol{L} ) + \frac{1}{2} \| f  \|_{\mathcal{H}}^2
\end{equation}

\noindent
Notice that $\| \boldsymbol{L} \boldsymbol{y} \|_2 = \| \boldsymbol{y} \|_{\boldsymbol{A}},\; \forall \; \boldsymbol{y} \in \mathbb{R}^m$, where $\boldsymbol{A} \triangleq \boldsymbol{L}^T \boldsymbol{L} \succeq \boldsymbol{0}$. In other words, the Euclidean norms of vector differences appearing in (\ref{eq:formulation_general}) are Mahalanobis distances for the output space. Note that if $\boldsymbol{L}$ is not full-rank, then $\boldsymbol{A}$ is not strictly positive definite, thus $\| \cdot \|_{\boldsymbol{A}}$ will be a pseudo-metric in $\mathbb{R}^m$. The rationale behind this formulation is as follows. The first term, the \emph{collocation} term, forces similar (w.r.t. the similarity measure $s$) input samples to be mapped closely in the output space (unsupervised learning task). The second term, the \emph{regression} term, forces samples to be mapped close to their target values (supervised learning task). In the context of classification tasks, the combination of these two terms aims to force data that belong to the same class to be mapped close to the same cluster. Closeness in the output space is measured via a Mahalanobis metric that is parameterized via $\boldsymbol{L}$. The third term, as we will show later, controls the magnitude of matrix $\boldsymbol{A}$ and facilitates the derivation of our proposed algorithm. Finally, the fourth term is a regularization term and is penalizing the complexity of $f$. Eventually, one can simultaneously learn the output space distance metric and the mapping $f$ through a joint minimization.

The functional of \pref{eq:formulation_general} satisfies the conditions stipulated by the Representer Theorem for Hilbert spaces of vector-valued elements (Theorem 5 in \cite{Micchelli2005}) and, therefore, for a fixed value of $\boldsymbol{L}$, the unique minimizer $\hat{f}$ is of the form:

\begin{equation}
	\label{eq:solution_f}
	\hat{f} = \sum_{i=1}^n  K_{x_i} \boldsymbol{c}_i
\end{equation}

\noindent
where the $m$-dimensional vectors $\{ \boldsymbol{c}_i \}_{i=1}^n$ are to be learned. Notice that, due to \eref{eq:operator_point_evaluation}, the explicit input-to-output mapping is given in \eref{eq:mapping} and
is, in general, non-linear in $x$, if $\mathcal{X}$ is a vector space over the reals.

\begin{equation}
	\label{eq:mapping}
	\hat{f}(x) = \sum_{i=1}^n  \boldsymbol{K} ( x, x_i ) \boldsymbol{c}_i
\end{equation}

\begin{proposition}

\pref{eq:formulation_general} is equivalent to the following minimization problem:

\begin{equation}
\min_{\boldsymbol{c}, \boldsymbol{L}} \; \frac{1}{2} \boldsymbol{c}^T \bar{\boldsymbol{K}} \boldsymbol{c} + \frac{\gamma}{2} \sum_{i, j} s_{ij}\left \| \boldsymbol{L} \boldsymbol{\Gamma}_{ij} \boldsymbol{c} \right \|_2^2 +  \frac{\lambda}{2}\sum_{i} \| \boldsymbol{L} ( \bar{\boldsymbol{K}}_i \boldsymbol{c} - \boldsymbol{y}_i ) \|_2^2 + \rho \;\mathrm{tr} ( \boldsymbol{L} )
\label{eq:formulation_general_matrix_form}
\end{equation}

\noindent 
where $\boldsymbol{c} \triangleq [ \boldsymbol{c}_1^T, \cdots, \boldsymbol{c}_n^T ]^T \in \mathbb{R}^{mn}$, $\bar{\boldsymbol{K}} \in \mathbb{R}^{mn \times mn}$ is the kernel matrix for the training set (as defined for \eref{eq:kernel_condition_pd2}), $\bar{\boldsymbol{K}}_i = \bar{\boldsymbol{K}}(x_i) \triangleq [\boldsymbol{K} ( x_i, x_1 ), \cdots, \boldsymbol{K} ( x_i, x_n ) ] \in \mathbb{R}^{m \times mn}$, and $\boldsymbol{\Gamma}_{ij} = \boldsymbol{\Gamma}(x_i, x_j)  \triangleq \bar{\boldsymbol{K}}_i - \bar{\boldsymbol{K}}_j$. 
\end{proposition}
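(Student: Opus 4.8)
The plan is to reduce the infinite-dimensional minimization over $f \in \mathcal{H}$ in \eqref{eq:formulation_general} to a finite-dimensional one over $\boldsymbol{c} \in \mathbb{R}^{mn}$ by invoking the Representer Theorem for vector-valued RKHS (Theorem 5 in \cite{Micchelli2005}), and then to substitute the representer form \eqref{eq:solution_f} into the objective and simplify each of the four terms using the point-evaluation identity \eqref{eq:operator_point_evaluation} and the reproducing property \eqref{eq:reproducing_property}.

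First I would fix $\boldsymbol{L}$ and observe that, viewed as a functional of $f$ alone, the objective in \eqref{eq:formulation_general} depends on $f$ only through the finitely many point evaluations $\{ f(x_i) \}_{i=1}^n$ (appearing in the collocation and regression terms) together with the strictly increasing function $t \mapsto \tfrac{1}{2} t^2$ of $\| f \|_{\mathcal{H}}$ (the regularization term). Since, by the reproducing property, $\boldsymbol{y}^T f(x_i) = \langle f, K_{x_i} \boldsymbol{y} \rangle_{\mathcal{H}}$ and $K_{x_i}$ is a bounded linear operator, each coordinate of the evaluation map $f \mapsto f(x_i)$ is a bounded linear functional on $\mathcal{H}$; hence the hypotheses of the Representer Theorem hold and, for every fixed $\boldsymbol{L}$, the minimizing $\hat{f}$ has the form \eqref{eq:solution_f}, $\hat{f} = \sum_{i=1}^n K_{x_i} \boldsymbol{c}_i$. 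Therefore $\inf_{f,\boldsymbol{L}}$ of \eqref{eq:formulation_general} equals $\inf_{\boldsymbol{L}} \inf_{\boldsymbol{c}}$ of the objective restricted to representer-form functions, which is exactly $\inf_{\boldsymbol{c},\boldsymbol{L}}$ of \eqref{eq:formulation_general_matrix_form}; establishing this chain of equalities is the conceptual core, and the remainder is bookkeeping.

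Next I would carry out the substitution. By \eqref{eq:operator_point_evaluation}, $\hat{f}(x_i) = \sum_{j=1}^n \boldsymbol{K}(x_i, x_j) \boldsymbol{c}_j = \bar{\boldsymbol{K}}_i \boldsymbol{c}$, so the regression term becomes $\tfrac{\lambda}{2} \sum_i \| \boldsymbol{L}(\bar{\boldsymbol{K}}_i \boldsymbol{c} - \boldsymbol{y}_i) \|_2^2$; likewise $\hat{f}(x_i) - \hat{f}(x_j) = (\bar{\boldsymbol{K}}_i - \bar{\boldsymbol{K}}_j)\boldsymbol{c} = \boldsymbol{\Gamma}_{ij}\boldsymbol{c}$, turning the collocation term into $\tfrac{\gamma}{2} \sum_{i,j} s_{ij} \| \boldsymbol{L} \boldsymbol{\Gamma}_{ij} \boldsymbol{c} \|_2^2$. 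For the fourth term, expanding $\| \hat{f} \|_{\mathcal{H}}^2 = \langle \sum_i K_{x_i}\boldsymbol{c}_i, \sum_j K_{x_j}\boldsymbol{c}_j \rangle_{\mathcal{H}}$ by bilinearity and applying \eqref{eq:reproducing_property} gives $\sum_{i,j} \boldsymbol{c}_i^T \boldsymbol{K}(x_i,x_j) \boldsymbol{c}_j$, which equals $\boldsymbol{c}^T \bar{\boldsymbol{K}} \boldsymbol{c}$ by the block definition of $\bar{\boldsymbol{K}}$. The term $\rho\,\mathrm{tr}(\boldsymbol{L})$ is untouched. Assembling the pieces recovers \eqref{eq:formulation_general_matrix_form} exactly.

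I expect the main obstacle to be the rigorous justification of the first step rather than the algebra: one must confirm that boundedness of the operators $K_{x_i}$ makes point evaluation continuous on $\mathcal{H}$, so that the functional decomposes precisely into an evaluation-dependent part plus a strictly monotone function of the norm, matching the hypotheses of Theorem 5 in \cite{Micchelli2005}. One should also state what "equivalent" means here, namely that the two problems have the same optimal value and that an optimal $(\boldsymbol{c},\boldsymbol{L})$ yields an optimal $(f,\boldsymbol{L})$ through \eqref{eq:solution_f}; this includes the standard orthogonal-decomposition remark that any component of $f$ lying outside $\mathrm{span}\{ K_{x_i}\boldsymbol{y} : i=1,\dots,n,\ \boldsymbol{y} \in \mathbb{R}^m \}$ leaves all point evaluations unchanged while strictly increasing $\| f \|_{\mathcal{H}}$, and is hence never optimal.
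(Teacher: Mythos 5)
Your proposal is correct and follows the same route the paper indicates: apply the Representer Theorem for fixed $\boldsymbol{L}$, substitute \eref{eq:solution_f} into \pref{eq:formulation_general}, and simplify via \eref{eq:operator_point_evaluation} and \eref{eq:reproducing_property}. You simply spell out the details (the evaluation identities $\hat{f}(x_i)=\bar{\boldsymbol{K}}_i\boldsymbol{c}$, the norm computation $\|\hat{f}\|_{\mathcal{H}}^2=\boldsymbol{c}^T\bar{\boldsymbol{K}}\boldsymbol{c}$, and the orthogonal-decomposition justification of equivalence) that the paper leaves to the reader.
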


\noindent
The above proposition can be proved by directly substituting \eref{eq:solution_f} into \pref{eq:formulation_general} and then using \eref{eq:reproducing_property}. Given two samples $x$, $x' \in \mathcal{X}$, the inferred metric will be of the form

\begin{equation}
	\label{eq:metric_fixed_kernel}
	d(x, x') = \| \boldsymbol{L} \boldsymbol{\Gamma}(x, x') \boldsymbol{c} \|_2 = \| \boldsymbol{\Gamma}(x, x') \boldsymbol{c} \|_{\boldsymbol{A}}
\end{equation}

\noindent
with $\boldsymbol{A} = \boldsymbol{L}^T \boldsymbol{L}$. Next, we state a result that facilitates the solution of \pref{eq:formulation_general_matrix_form}.

\begin{proposition}
\label{prop:convex_general_case}
\pref{eq:formulation_general_matrix_form} is convex with respect to each of the two variables $\boldsymbol{c}$ and $\boldsymbol{L}$ individually. 
\end{proposition}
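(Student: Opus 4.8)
The plan is to handle the two cases separately, and in each case to write the objective of \pref{eq:formulation_general_matrix_form} as a sum of terms, verify each term is convex in the variable of interest, and then conclude via the fact that a nonnegative combination of convex functions is convex. Throughout I will use two elementary facts: (i) if $g$ is convex and $T$ is affine, then $g \circ T$ is convex; and (ii) $\boldsymbol{z} \mapsto \| \boldsymbol{z} \|_2^2$ is convex on $\mathbb{R}^m$. I will also invoke the positive semi-definiteness of the training kernel matrix, $\bar{\boldsymbol{K}} \succeq \boldsymbol{0}$, which is exactly condition \eref{eq:kernel_condition_pd2}.

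First, fix $\boldsymbol{L}$ and regard the objective as a function of $\boldsymbol{c} \in \mathbb{R}^{mn}$. The quadratic term $\tfrac{1}{2}\boldsymbol{c}^T \bar{\boldsymbol{K}} \boldsymbol{c}$ is convex because $\bar{\boldsymbol{K}} \succeq \boldsymbol{0}$. For each $(i,j)$ the map $\boldsymbol{c} \mapsto \boldsymbol{L} \boldsymbol{\Gamma}_{ij} \boldsymbol{c}$ is linear, so by (i)--(ii) the term $\| \boldsymbol{L} \boldsymbol{\Gamma}_{ij} \boldsymbol{c} \|_2^2$ is convex in $\boldsymbol{c}$, and since $s_{ij} \geq 0$ and $\gamma \geq 0$ the whole collocation sum is convex. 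Likewise $\boldsymbol{c} \mapsto \boldsymbol{L} ( \bar{\boldsymbol{K}}_i \boldsymbol{c} - \boldsymbol{y}_i )$ is affine, so each regression term $\| \boldsymbol{L} ( \bar{\boldsymbol{K}}_i \boldsymbol{c} - \boldsymbol{y}_i ) \|_2^2$ is convex in $\boldsymbol{c}$, and $\lambda \geq 0$ preserves convexity of the sum. Finally $\rho\,\mathrm{tr}(\boldsymbol{L})$ is constant in $\boldsymbol{c}$. Adding these, the objective is convex in $\boldsymbol{c}$ over the convex domain $\mathbb{R}^{mn}$.

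Next, fix $\boldsymbol{c}$ and regard the objective as a function of $\boldsymbol{L}$. The term $\tfrac{1}{2}\boldsymbol{c}^T \bar{\boldsymbol{K}} \boldsymbol{c}$ is now constant. Setting $\boldsymbol{v}_{ij} \triangleq \boldsymbol{\Gamma}_{ij}\boldsymbol{c}$ and $\boldsymbol{w}_i \triangleq \bar{\boldsymbol{K}}_i \boldsymbol{c} - \boldsymbol{y}_i$, which are fixed vectors, the maps $\boldsymbol{L} \mapsto \boldsymbol{L} \boldsymbol{v}_{ij}$ and $\boldsymbol{L} \mapsto \boldsymbol{L} \boldsymbol{w}_i$ are linear in the entries of $\boldsymbol{L}$; by (i)--(ii), $\| \boldsymbol{L}\boldsymbol{v}_{ij} \|_2^2$ and $\| \boldsymbol{L}\boldsymbol{w}_i \|_2^2$ are convex in $\boldsymbol{L}$, and the nonnegative factors $s_{ij}$, $\gamma$, $\lambda$ keep the corresponding sums convex. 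The remaining term $\rho\,\mathrm{tr}(\boldsymbol{L})$ is linear, hence convex, in $\boldsymbol{L}$. Summing, the objective is convex in $\boldsymbol{L}$.

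I do not expect a genuine obstacle here: the whole argument is a routine application of the composition rule and the nonnegative-combination rule for convex functions. The one point worth stating explicitly is that the coupling term $\| \boldsymbol{L} \boldsymbol{\Gamma}_{ij} \boldsymbol{c} \|_2^2$ is \emph{bilinear} in the pair $(\boldsymbol{L}, \boldsymbol{c})$, so the objective is \emph{not} jointly convex; the proposition asserts only separate (block) convexity in each variable, which is exactly what is needed to justify an alternating minimization scheme for \pref{eq:formulation_general_matrix_form}. One should also note that no constraint is placed on $\boldsymbol{L}$ (the positive semi-definiteness requirement is absorbed into $\boldsymbol{A} = \boldsymbol{L}^T \boldsymbol{L}$), so in each block the feasible set is an entire Euclidean space and hence convex, as required for the conclusion.
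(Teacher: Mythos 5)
Your proof is correct and follows essentially the same route as the paper: term-by-term verification that each summand is convex in the relevant block, using the convexity of $\|\boldsymbol{X}\boldsymbol{z}\|_2^2$ in $\boldsymbol{X}$ (which you derive from the affine-composition rule rather than citing Boyd) and linearity of the trace. The only cosmetic difference is that for the $\boldsymbol{c}$ block the paper exhibits the positive semi-definite Hessian in \eref{eq:convex_c_hessian}, whereas you argue via composition with affine maps; these are equivalent, and your explicit remarks that the objective is only blockwise (not jointly) convex and that each block's feasible set is all of Euclidean space are accurate and harmless additions.
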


\begin{proof}
The convexity of the objective function, denoted as $Q \left ( \boldsymbol{c}, \boldsymbol{L} \right )$,  with respect to $\boldsymbol{c}$ is guaranteed by the positive semi-definiteness of the corresponding Hessian matrix of $Q$:

\begin{equation}
\frac{\partial^2 Q ( \boldsymbol{c}, \boldsymbol{L} )}{\partial \boldsymbol{c} \partial \boldsymbol{c}^T} =  \bar{\boldsymbol{K}} + \gamma \sum_{i,j} s_{ij} \boldsymbol{\Gamma}_{ij}^T \boldsymbol{L}^T \boldsymbol{L} \boldsymbol{\Gamma}_{ij} + \lambda \sum_i \bar{\boldsymbol{K}}_i^T \boldsymbol{L}^T \boldsymbol{L} \bar{\boldsymbol{K}}_i \succeq \boldsymbol{0}
\label{eq:convex_c_hessian}
\end{equation}

To show the convexity with respect to $\boldsymbol{L}$, we consider each term separately. The convexity of $\| \boldsymbol{L} \boldsymbol{\Gamma}_{ij} \boldsymbol{c} \|_2^2$ stems from the conclusion in \cite[p. 110]{Boyd2004}, which states that $ \| \boldsymbol{Xz} \|_2^2$ is convex with respect to any matrix $\boldsymbol{X}$ for any $\boldsymbol{z}$. For the same reason, $ \| \boldsymbol{L} ( \bar{\boldsymbol{K}}_i \boldsymbol{c} - \boldsymbol{y}_i ) \|_2^2$ is also convex. Finally, $\mathrm{tr} (\boldsymbol{L} )$ is convex in $\boldsymbol{L}$, as shown in \cite[p. 109]{Boyd2004}. Thus, the objective function is also convex with respect to $\boldsymbol{L}$.
\end{proof}

Based on \propref{prop:convex_general_case}, we can perform the joint minimization \pref{eq:formulation_general_matrix_form} by block coordinate descent with respect to $\boldsymbol{c}$ and $\boldsymbol{L}$.  We set the partial derivatives of $Q$ with respect to the two variables to zero and obtain

\begin{equation}
\frac{\partial Q ( \boldsymbol{c}, \boldsymbol{L} )}{\partial \boldsymbol{c}} = \boldsymbol{0}  \Rightarrow \boldsymbol{c} = \lambda ( \frac{\partial^2 Q ( \boldsymbol{c}, \boldsymbol{L} )}{\partial \boldsymbol{c} \partial \boldsymbol{c}^T} )^{\dagger} \sum_{i}\bar{\boldsymbol{K}}_i^T \boldsymbol{L}^T \boldsymbol{L} \boldsymbol{y}_i
\label{eq:solution_c_general} 
\end{equation}

\begin{equation}
\frac{\partial Q ( \boldsymbol{c}, \boldsymbol{L} )}{\partial \boldsymbol{L}} = \boldsymbol{0}  \Rightarrow \boldsymbol{L} =  - \rho ( \gamma \sum_{i,j} s_{ij} \boldsymbol{\Gamma}_{ij} \boldsymbol{c} \boldsymbol{c}^T \boldsymbol{\Gamma}_{ij}^T + \lambda \sum_{i} ( \bar{\boldsymbol{K}}_i \boldsymbol{c} - \boldsymbol{y}_i ) ( \bar{\boldsymbol{K}}_i \boldsymbol{c} - \boldsymbol{y}_i )^T )^{\dagger} 
\label{eq:solution_L_general}
\end{equation}

\noindent 
where $\dagger$ stands for Moore-Penrose pseudo-inversion. One can update $\boldsymbol{c}$ via \eref{eq:solution_c_general} by holding $\boldsymbol{L}$ fixed to its current estimate and then update $\boldsymbol{L}$ via \eref{eq:solution_L_general} by using the most current value of $\boldsymbol{c}$. Repeating these steps until convergence would constitute the basis for the block-coordinate descent to train this model. Due to the calculation of the pseudo-inverse, the time complexity of each iteration, in the worst case scenario, is $O((mn)^3)$. 

As we can observe from \eref{eq:solution_L_general}, since $\boldsymbol{A} = \boldsymbol{L}^T \boldsymbol{L}$, the parameter $\rho$ that appears in the term $\rho \mathrm{tr}(\boldsymbol{L})$ of \pref{eq:formulation_general} directly controls the norm of $\boldsymbol{A}$. Although other regularization terms on $\boldsymbol{L}$ may be utilized in place of $\rho \mathrm{tr} (\boldsymbol{L})$, they may not lead to a simple update equation for $\boldsymbol{L}$, such as the one given in \eref{eq:solution_L_general}. The potential appeal of this formulation stems from the simplicity of the training algorithm combined with the flexibility of choosing a matrix kernel function that is suitable to the application at hand.

\section{Parameterized Matrix Kernel \ac{DML} Formulation}
\label{sec:specialized_kernel}

Our next formulation shares all assumptions with the previous one with the exception that the matrix kernel function $\boldsymbol{K}$ is now parameterized. We shall show that, even though the matrix kernel function is somewhat restricted, it has the property that is able to implicitly determine the output space Mahalanobis metric. To start, we assume a matrix kernel of the form:

\begin{equation}
\label{eq:learnable_kernel}
\boldsymbol{K}( x, x' ) = k(x,x') \boldsymbol{B}
\end{equation}

\noindent
where $k$ is a scalar kernel function that is predetermined by the user and $\boldsymbol{B} \in \mathbb{R}^{m \times m}$ is a symmetric, positive semi-definite matrix, which will be learned from $\mathcal{T}$. Because of this facts, $\boldsymbol{K}$ satisfies \eref{eq:kernel_condition_symmetry} through \eref{eq:kernel_condition_pd2} and, therefore is a legitimate matrix kernel function. The formulation for the alternative \ac{DML} model reads

\begin{equation}
\label{eq:formulation_specialized_B}
\min_{f, \boldsymbol{B}} \; \frac{\gamma}{2} \sum_{i,j} s_{ij} \| f ( \boldsymbol{x}_i ) - f ( \boldsymbol{x}_j ) \|_2^2 + \frac{\lambda}{2} \sum_{i} \| f ( \boldsymbol{x}_i ) - \boldsymbol{y}_i \|_2^2 + \frac{\rho}{2} \| \boldsymbol{B}\|_F^2 + \frac{1}{2} \| f \|_{\mathcal{H}}^2
\end{equation}

\noindent
where $ \| \boldsymbol{B} \|_F^2 \triangleq \mathrm{tr} \{ \boldsymbol{B}^T \boldsymbol{B} \} = \mathrm{tr} \{ \boldsymbol{B}^2 \}$ is the squared Frobenius norm of $\boldsymbol{B}$ and $\mathrm{tr} \{ \cdot \}$ is the matrix trace operator. \pref{eq:formulation_specialized_B} differs from \pref{eq:formulation_general} in a regularization term and in that the former seems to use Euclidean distances in the output space, while the latter uses Mahalanobis distances in the output space with weight matrix $\boldsymbol{A} = \boldsymbol{L}^T \boldsymbol{L}$. As was the case with the formulation of \sref{sec:Sub_Problem_Formulation}, the functional of \pref{eq:formulation_specialized_B} also satisfies the conditions of the Representer Theorem for Hilbert spaces of vector-valued elements and, for fixed value of $\boldsymbol{B}$, the unique minimizer $\hat{f}$ has the same form as the one of \eref{eq:solution_f} and the explicit input-to-output mapping is given as

\begin{equation}
\label{eq:mapping2}
\hat{f}(x) = \sum_{i=1}^n  k(x, x_i) \boldsymbol{B} \boldsymbol{c}_i
\end{equation}

\noindent
which, in all but trivial cases, is again non-linear in $x$, if $\mathcal{X}$ is a vector space over the reals. In a derivation similar to the one found in \sref{sec:Sub_Problem_Formulation}, one can show that \pref{eq:formulation_specialized_B} is equivalent to the following constrained joint minimization problem:

\begin{equation}
\label{eq:formulation_specialized_B_matrix_form}
\min_{\boldsymbol{C}, \boldsymbol{B} \succeq \boldsymbol{0}} \; \frac{\gamma}{2} \mathrm{tr} \{ \boldsymbol{C} \boldsymbol{\widetilde{K}}_{\Delta} \boldsymbol{C}^T \boldsymbol{B}^2 \} + \frac{\lambda}{2} \| \boldsymbol{BC\widetilde{K}} - \boldsymbol{Y} \|_F^2 + \frac{\rho}{2} \| \boldsymbol{B} \|_F^2 +  \frac{1}{2}\mathrm{tr} \{ \boldsymbol{C}^T\boldsymbol{BC} \boldsymbol{\widetilde{K}} \}
\end{equation}

\noindent 
where $\boldsymbol{C} \triangleq [ \boldsymbol{c}_1, \cdots, \boldsymbol{c}_n ] \in \mathbb{R}^{m \times n}$, $\boldsymbol{\widetilde{K}} \in \mathbb{R}^{n \times n}$ is the kernel matrix with $k(x_i, x_j)$ as its $(i,j)$ element, $\boldsymbol{\widetilde{K}}_{\Delta} \triangleq \boldsymbol{\widetilde{K}} [ \mathrm{diag} \{ \boldsymbol{S} \boldsymbol{1}_n \} - \boldsymbol{S} ] \boldsymbol{\widetilde{K}} \in \mathbb{R}^{n \times n}$, where $\mathrm{diag} \{ \cdot \}$ is the operator producing a diagonal matrix with the same diagonal as the operator's argument, $\boldsymbol{1}_n \in \mathbb{R}^n$ is the all-ones vector and $\boldsymbol{Y} \triangleq [ \boldsymbol{y}_1,\cdots,\boldsymbol{y}_n ] \in \mathbb{R}^{m \times n}$. The learned metric will be of the form

\begin{equation}
\label{eq:metric_parameterized_kernel}
d(x, x')  = \| \boldsymbol{B} \boldsymbol{C} [ \boldsymbol{\widetilde{k}}(x) - \boldsymbol{\widetilde{k}}(x') ] \|_2 = \| \boldsymbol{C} [ \boldsymbol{\widetilde{k}}(x) - \boldsymbol{\widetilde{k}}(x') ] \|_{\boldsymbol{A}} 
\end{equation}

\noindent
where $\boldsymbol{\widetilde{k}}(x) \triangleq [ k(x,x_1), \ldots k(x,x_n) ]^T$ and, in this case, $\boldsymbol{A} = \boldsymbol{B}^2$. It is readily seen that the matrix $\boldsymbol{B}$ specifying the matrix kernel function also determines the Mahalanobis distance in the output space $\boldsymbol{R}^m$. Therefore, this model implicitly learns the Mahalanobis distance by learning the $\boldsymbol{B}$ matrix in the kernel function.

\begin{proposition}
\pref{eq:formulation_specialized_B_matrix_form} is convex with respect to each of the two variables $\boldsymbol{C}$ and $\boldsymbol{B}$.
\label{prop:convex_specific_kernel}
\end{proposition}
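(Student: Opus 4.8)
The plan is to follow the template of the proof of \propref{prop:convex_general_case}: write the objective as $Q(\boldsymbol{C},\boldsymbol{B})$, fix one variable at a time, split $Q$ into its four summands, and argue convexity of each summand in the remaining variable, so that the sum (and, in the $\boldsymbol{B}$ case, the sum restricted to the PSD cone) is convex. The single device used throughout is the rewriting of each trace term as a squared Frobenius norm of a linear image of the free variable: since $\boldsymbol{X}\mapsto\|\boldsymbol{X}\boldsymbol{Z}\|_F^2=\sum_k\|\boldsymbol{X}\boldsymbol{z}_k\|_2^2$ is convex by the columnwise application of the fact in \cite[p.~110]{Boyd2004}, and composition of $\|\cdot\|_F^2$ with an affine map preserves convexity, this reduces everything to an elementary check.

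Two positive-semidefiniteness facts are needed up front. First, $\boldsymbol{\widetilde{K}}\succeq\boldsymbol{0}$, because $k$ is a positive-definite scalar kernel. Second, $\mathrm{diag}\{\boldsymbol{S}\boldsymbol{1}_n\}-\boldsymbol{S}$ is the unnormalized graph Laplacian of the nonnegative, symmetric weights $s_{ij}$, hence symmetric positive semidefinite (for any $\boldsymbol{v}$, $\boldsymbol{v}^T(\mathrm{diag}\{\boldsymbol{S}\boldsymbol{1}_n\}-\boldsymbol{S})\boldsymbol{v}=\tfrac12\sum_{i,j}s_{ij}(v_i-v_j)^2\ge0$); consequently $\boldsymbol{\widetilde{K}}_{\Delta}=\boldsymbol{\widetilde{K}}[\mathrm{diag}\{\boldsymbol{S}\boldsymbol{1}_n\}-\boldsymbol{S}]\boldsymbol{\widetilde{K}}\succeq\boldsymbol{0}$ as a congruence of a PSD matrix. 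These guarantee the existence of symmetric PSD square roots $\boldsymbol{\widetilde{K}}^{1/2}$ and $\boldsymbol{\widetilde{K}}_{\Delta}^{1/2}$; and since $\boldsymbol{B}\succeq\boldsymbol{0}$ on the feasible set, $\boldsymbol{B}^{1/2}$ exists and is symmetric as well.

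For convexity in $\boldsymbol{C}$ (with $\boldsymbol{B}$ fixed): by the cyclic property of the trace together with $\boldsymbol{B}^2=\boldsymbol{B}\boldsymbol{B}$ and $\boldsymbol{B}=\boldsymbol{B}^T$, the collocation term equals $\tfrac{\gamma}{2}\|\boldsymbol{B}\boldsymbol{C}\boldsymbol{\widetilde{K}}_{\Delta}^{1/2}\|_F^2$, convex in $\boldsymbol{C}$; the regression term $\tfrac{\lambda}{2}\|\boldsymbol{B}\boldsymbol{C}\boldsymbol{\widetilde{K}}-\boldsymbol{Y}\|_F^2$ is $\|\cdot\|_F^2$ composed with an affine map of $\boldsymbol{C}$; the $\rho$-term is constant; and $\tfrac{1}{2}\mathrm{tr}\{\boldsymbol{C}^T\boldsymbol{B}\boldsymbol{C}\boldsymbol{\widetilde{K}}\}=\tfrac12\|\boldsymbol{B}^{1/2}\boldsymbol{C}\boldsymbol{\widetilde{K}}^{1/2}\|_F^2$ is again convex. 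For convexity in $\boldsymbol{B}$ (with $\boldsymbol{C}$ fixed): put $\boldsymbol{P}\triangleq\boldsymbol{C}\boldsymbol{\widetilde{K}}_{\Delta}\boldsymbol{C}^T\succeq\boldsymbol{0}$; then the collocation term is $\tfrac{\gamma}{2}\mathrm{tr}\{\boldsymbol{P}\boldsymbol{B}^2\}=\tfrac{\gamma}{2}\|\boldsymbol{P}^{1/2}\boldsymbol{B}\|_F^2$ (using $\boldsymbol{B}=\boldsymbol{B}^T$), convex in $\boldsymbol{B}$; the regression term is $\|\cdot\|_F^2$ of a map affine in $\boldsymbol{B}$; $\tfrac{\rho}{2}\|\boldsymbol{B}\|_F^2$ is convex; and $\tfrac{1}{2}\mathrm{tr}\{\boldsymbol{C}^T\boldsymbol{B}\boldsymbol{C}\boldsymbol{\widetilde{K}}\}=\tfrac12\mathrm{tr}\{\boldsymbol{B}(\boldsymbol{C}\boldsymbol{\widetilde{K}}\boldsymbol{C}^T)\}$ is linear, hence convex, in $\boldsymbol{B}$. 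Summing and noting that $\{\boldsymbol{B}\succeq\boldsymbol{0}\}$ is a convex cone gives convexity of the constrained problem in $\boldsymbol{B}$. (Equivalently, one can compute the Hessians in $\mathrm{vec}(\boldsymbol{C})$ and $\mathrm{vec}(\boldsymbol{B})$ and verify positive semidefiniteness directly, as was done in \eref{eq:convex_c_hessian}.)

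The main obstacle is the collocation term $\tfrac{\gamma}{2}\mathrm{tr}\{\boldsymbol{C}\boldsymbol{\widetilde{K}}_{\Delta}\boldsymbol{C}^T\boldsymbol{B}^2\}$, which is simultaneously quadratic in both variables: the argument relies on (i) identifying $\boldsymbol{\widetilde{K}}_{\Delta}$ — equivalently, the weight Laplacian $\mathrm{diag}\{\boldsymbol{S}\boldsymbol{1}_n\}-\boldsymbol{S}$ — as positive semidefinite so that a symmetric square root is available, and (ii) exploiting $\boldsymbol{B}=\boldsymbol{B}^T$ to split $\boldsymbol{B}^2$ symmetrically across the trace; only then does the term collapse to a squared Frobenius norm linear in whichever variable is held free. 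Everything else is the routine observation that $\|\cdot\|_F^2$ composed with an affine map is convex and that convexity is preserved under nonnegative sums.
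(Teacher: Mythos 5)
Your proof is correct, but it takes a genuinely different route from the paper's. The paper invokes the matrix-convexity machinery of \cite[sec.~3.6]{Boyd2004}: it observes that $g(\boldsymbol{C})=\boldsymbol{C}^T\boldsymbol{B}\boldsymbol{C}$ is matrix convex when $\boldsymbol{B}\succeq\boldsymbol{0}$, that $\boldsymbol{X}\mapsto\mathrm{tr}\{\boldsymbol{W}\boldsymbol{X}\}$ is convex and non-decreasing (in the Loewner order) when $\boldsymbol{W}\succeq\boldsymbol{0}$, and then composes the two; convexity in $\boldsymbol{B}$ follows because each term is either linear in $\boldsymbol{B}$ or a composition of the same type. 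You instead push every trace term through symmetric PSD square roots and recognize each summand as $\|\cdot\|_F^2$ of a map that is linear or affine in whichever variable is free---essentially the same device the paper uses for \propref{prop:convex_general_case}, lifted columnwise from $\|\cdot\|_2^2$ to $\|\cdot\|_F^2$. Your route is more elementary and more self-contained: it avoids verifying matrix convexity and the monotone-composition rule, and it makes explicit two facts the paper only asserts in passing, namely $\boldsymbol{\widetilde{K}}_{\Delta}\succeq\boldsymbol{0}$ (which you derive correctly from the graph-Laplacian identity for the nonnegative symmetric weights $s_{ij}$, together with congruence by $\boldsymbol{\widetilde{K}}$) and the availability of $\boldsymbol{B}^{1/2}$ on the feasible set. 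What the paper's approach buys in exchange is uniformity and brevity: one composition lemma dispatches all four terms with the same two-line argument. One point to keep visible in your write-up: the identity $\mathrm{tr}\{\boldsymbol{P}\boldsymbol{B}^2\}=\|\boldsymbol{P}^{1/2}\boldsymbol{B}\|_F^2$ requires $\boldsymbol{B}=\boldsymbol{B}^T$, so what you establish is convexity of the objective restricted to the symmetric matrices (a convex set containing the feasible cone $\{\boldsymbol{B}\succeq\boldsymbol{0}\}$), which you correctly flag and which is all the block-coordinate step needs.
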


\begin{proof}
The proof is based on the following facts outlined in \cite[sec. 3.6]{Boyd2004}: (a) A matrix-valued function $g$ is \emph{matrix convex} if and if for any $\boldsymbol{z}$, $\boldsymbol{z}^T g \boldsymbol{z}$ is convex. (b) Suppose a matrix-valued function $g$ is matrix convex and a real-valued function $h$ is convex and non-decreasing. Then, $h \circ g$ is convex, where $\circ$ denotes function composition. (c) The function $\mathrm{tr} \{ \boldsymbol{W} \boldsymbol{X} \}$ is convex and non-decreasing in $\boldsymbol{X}$, if $\boldsymbol{W} \succeq \boldsymbol{0}$. In what follows, we show convexity for each term in \pref{eq:formulation_specialized_B_matrix_form}. Since $\mathrm{tr} \{ \boldsymbol{C}^T\boldsymbol{BC} \boldsymbol{\widetilde{K}} \} = \mathrm{tr} \{  \boldsymbol{C\widetilde{K}}\boldsymbol{C}^T \boldsymbol{B} \}$ and $\boldsymbol{C\widetilde{K}}\boldsymbol{C}^T \succeq \boldsymbol{0}$, therefore $\mathrm{tr} \{  \boldsymbol{C\widetilde{K}}\boldsymbol{C}^T \boldsymbol{B} \}$ is convex with respect to $\boldsymbol{B}$ based on facts (b) and (c). To show the convexity with respect to $\boldsymbol{C}$, note that the matrix-valued function $g ( \boldsymbol{C} ) = \boldsymbol{C}^T\boldsymbol{BC}$ is matrix convex with respect to $\boldsymbol{C}$ based on $\boldsymbol{B} \succeq \boldsymbol{0}$ and fact (a). Thus, with $\boldsymbol{\widetilde{K}} \succeq \boldsymbol{0}$ and fact (b) and (c), we achieve the convexity. The same method is employed to prove the convexity of the other three terms (note that $\widetilde{K}_{\Delta} \succeq \boldsymbol{0}$).
\end{proof}

Based on \propref{prop:convex_specific_kernel}, we can again apply a block coordinate descent algorithm to solve \pref{eq:formulation_specialized_B_matrix_form}. If $\tilde{Q} ( \boldsymbol{C}, \boldsymbol{B} )$ is the relevant objective function, we set the partial derivative of $\tilde{Q} ( \boldsymbol{C}, \boldsymbol{B} )$ with respect to $\boldsymbol{C}$ zero and obtain:

\begin{equation}
\frac{\partial \tilde{Q} ( \boldsymbol{C}, \boldsymbol{B} )}{\partial \boldsymbol{C}} = \boldsymbol{0} \Rightarrow \boldsymbol{C} + \gamma \boldsymbol{BC} \boldsymbol{\widetilde{K}}_{\Delta} \boldsymbol{\widetilde{K}}^{-1} + \lambda \boldsymbol{BC\widetilde{K}} = \lambda \boldsymbol{Y}
\label{eq:specialized_form_solve_C_1}
\end{equation}

As noted in \cite{Lancaster1970}, this matrix equation can be solved for $\boldsymbol{C}$ as follows:

\begin{equation}
\mathrm{vec} ( \boldsymbol{C} ) = \lambda ( \boldsymbol{I} + \gamma ( \boldsymbol{\widetilde{K}}_{\Delta}\boldsymbol{\widetilde{K}}^{-1} ) \otimes \boldsymbol{B} + \lambda \boldsymbol{\widetilde{K}} \otimes \boldsymbol{B} )^{-1} \mathrm{vec} ( \boldsymbol{Y} )
\label{eq:specialized_form_solve_C_2}
\end{equation}

To find the optimum $\boldsymbol{B}$ for fixed $\boldsymbol{C}$, due to the constraint $\boldsymbol{B} \succeq \boldsymbol{0}$, we use a projected gradient descent method. In each iteration, we update $\boldsymbol{B}$ using the traditional gradient descent rule: $\boldsymbol{B} \leftarrow \boldsymbol{B} - \alpha \bigtriangledown_{\boldsymbol{B}} \tilde{Q} ( \boldsymbol{C},\boldsymbol{B} )$, where $\alpha > 0$ is the step length, followed by projecting the updated $\boldsymbol{B}$ onto the cone of positive semi-definite matrices. Since $\tilde{Q} ( \boldsymbol{C},\boldsymbol{B} )$ is convex with respect to $\boldsymbol{B}$ for fixed $\boldsymbol{C}$, this procedure is able to find the optimum solution for $\boldsymbol{B}$. The gradient with respect to $\boldsymbol{B}$ is given as

\begin{equation}
	\label{eq:specialized_form_solve_B_2}
	\frac{\partial \tilde{Q}\ ( \boldsymbol{C},\boldsymbol{B} )}{\partial \boldsymbol{B}} = \boldsymbol{G} + \boldsymbol{G}^T - \boldsymbol{G}\odot \boldsymbol{I}  
\end{equation}

\noindent
where $\odot$ is the Hadamard matrix product and $\boldsymbol{G}$ is defined as

\begin{equation}
	\label{eq:Gexpression}
	\boldsymbol{G} \triangleq \boldsymbol{B} [  \boldsymbol{C} ( \gamma \widetilde{\boldsymbol{K}}_\Delta + \lambda \boldsymbol{K}^2 ) \boldsymbol{C}^T + \rho \boldsymbol{I} ] - ( \lambda \boldsymbol{Y} - \frac{1}{2} \boldsymbol{C} ) \boldsymbol{K} \boldsymbol{C}^T
\end{equation}

Therefore, for each iteration, the time complexity of updating $\boldsymbol{C}$ is $O((mn)^3)$, due to the calculation of a matrix inverse. When updating $\boldsymbol{B}$, the time complexity is determined by the convergence speed of the projected gradient descent method.

\section{Experiments}
\label{sec:Experiments}

In this section, we evaluate the performance of our two kernelized \ac{DML} methods on classification problems. Towards this purpose, we opt to set $\boldsymbol{y}_i = \boldsymbol{y}^{k(i)},\; \forall i \in \{ 1, 2, \ldots, n \}$, where $k(i) \in \{ 1, 2, \ldots, c \}$ is the class label of the $i^{th}$ sample and $\boldsymbol{y}^k$ is an appropriately chosen prototype target vector for the $k^{th}$ class. Additionally, we choose to evaluate the pair-wise sample similarities as $s_{i,j} = [ \boldsymbol{y}_i = \boldsymbol{y}_j ]$, where $[ predicate ]$ denotes the result of the Iversonian bracket, \ie\ it equals $1$, if $predicate$ evaluates to true, and $0$, if otherwise. After training each of these models, we employ a \ac{KNN} classifier to label samples in the range space of $f$; the classifier uses the models' learned metrics (given by \eref{eq:metric_fixed_kernel} and \eref{eq:metric_parameterized_kernel}) to establish nearest neighbors. 

We compare our methods with several other approaches. The first one labels samples of the original feature space via the $k$-NN classification rule using Euclidean distances and, provides a baseline for the accuracy that can be achieved for each classification problem we considered. The second one relies on a popular \ac{DML} method, namely the Large Margin Nearest Neighbor (LMNN) \ac{DML} method \cite{Weinberger2009}. We also selected two kernelized approaches for comparison, namely, Information-Theoretic Metric Learning (ITML) \cite{Davis2007} and kernelized LMNN (KLMNN) \cite{Chatpatanasiri2010}.

We evaluated all approaches on eight datasets from the UCI repository, namely, White Wine Quality (\textit{Wine}), Wall-Following Robot Navigation (\textit{Robot}), Statlog Vehicle Silhouettes (\textit{Vehicle}), Molecular Biology Splice-junction Gene Sequences (\textit{Molecular}), Waveform Database Generator Version 1 (\textit{Wave}), Ionosphere (\textit{Iono}), Cardiotocography (\textit{Cardio}), Pima Indians Diabetes (\textit{Pima}). For all datasets, each class was equally represented in number of samples. An exception is the original \textit{Wine} dataset that has eleven classes, eight of which are poorly represented; for this dataset we only chose data from the other three classes.


For our model with general matrix kernel function $\boldsymbol{K}$,  we chose the diagonal matrix $\boldsymbol{K} ( \boldsymbol{x}_i, \boldsymbol{x}_j ) = \mathrm{diag}  \{ [ k_1 ( \boldsymbol{x}_i, \boldsymbol{x}_j ), \ldots, k_m ( \boldsymbol{x}_i, \boldsymbol{x}_j ) ]^T \}$, where $k_1$ through $k_m$ were Gaussian kernel functions with different spreads. For the second model, where $\boldsymbol{K} = k \cdot \boldsymbol{B}$, we also chose $k$ to be a Gaussian kernel. During the test phase for all experiments, the parameters $\gamma$, $\lambda$, $\rho$, the output dimension $m$, the Gaussian kernel's spread parameter $\sigma$ and the number of nearest neighbors $\kappa$ to be used by the \ac{KNN} classifier are selected through cross-validation. Training of the models was performed using $10$\% and $50$\% of each data set. In the sequel, we provide the experimental results in figures, which display the average classification accuracies over $20$ runs. Also, the error bars correspond to a $95$\% confidence interval of the estimated accuracies.

\begin{figure}[ht]
\centering
	\subfloat[]{
		\includegraphics[width=8cm]{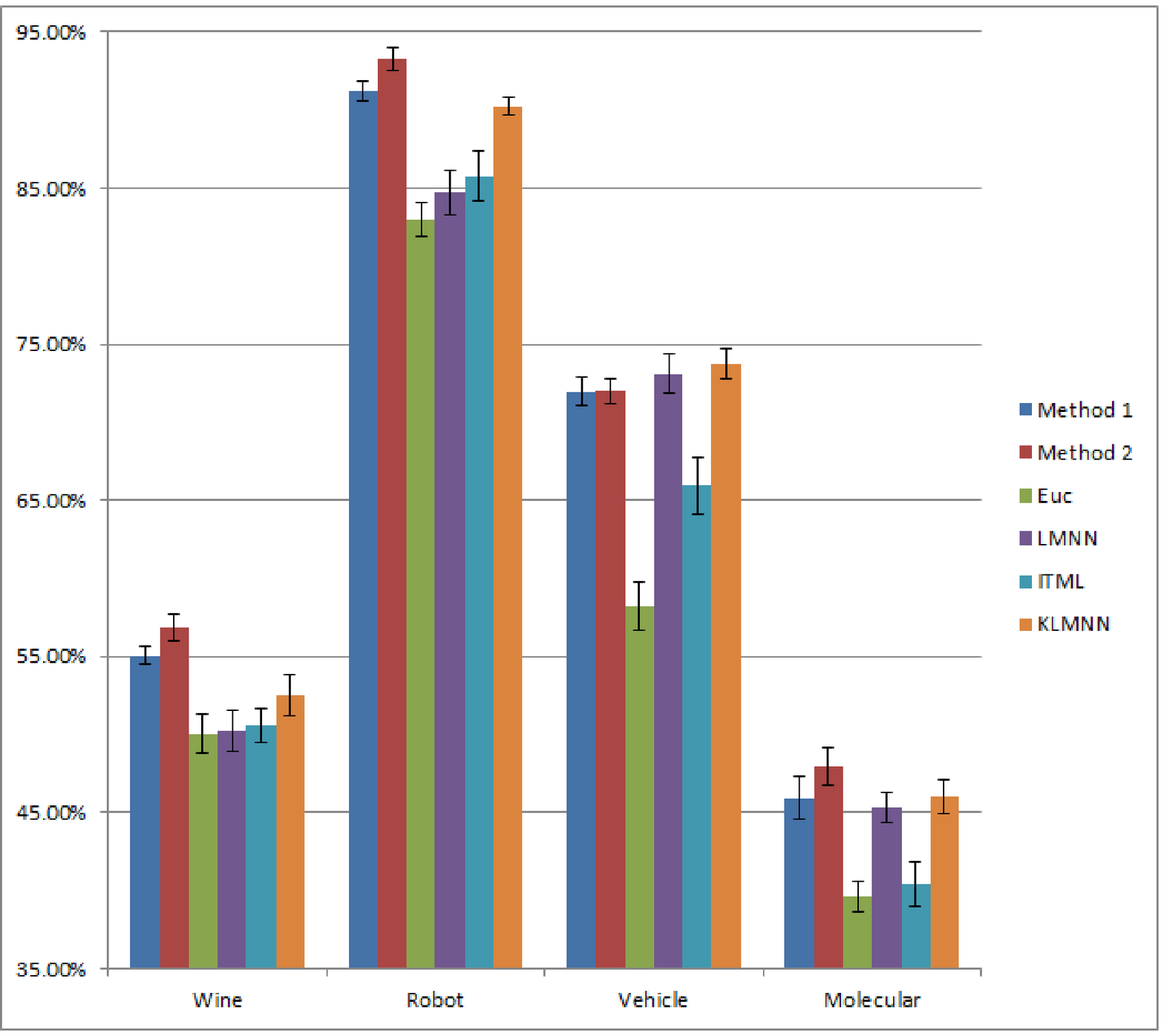}
		\label{fig:first_4_10_percent}}
	\subfloat[]{
		\includegraphics[width=8cm]{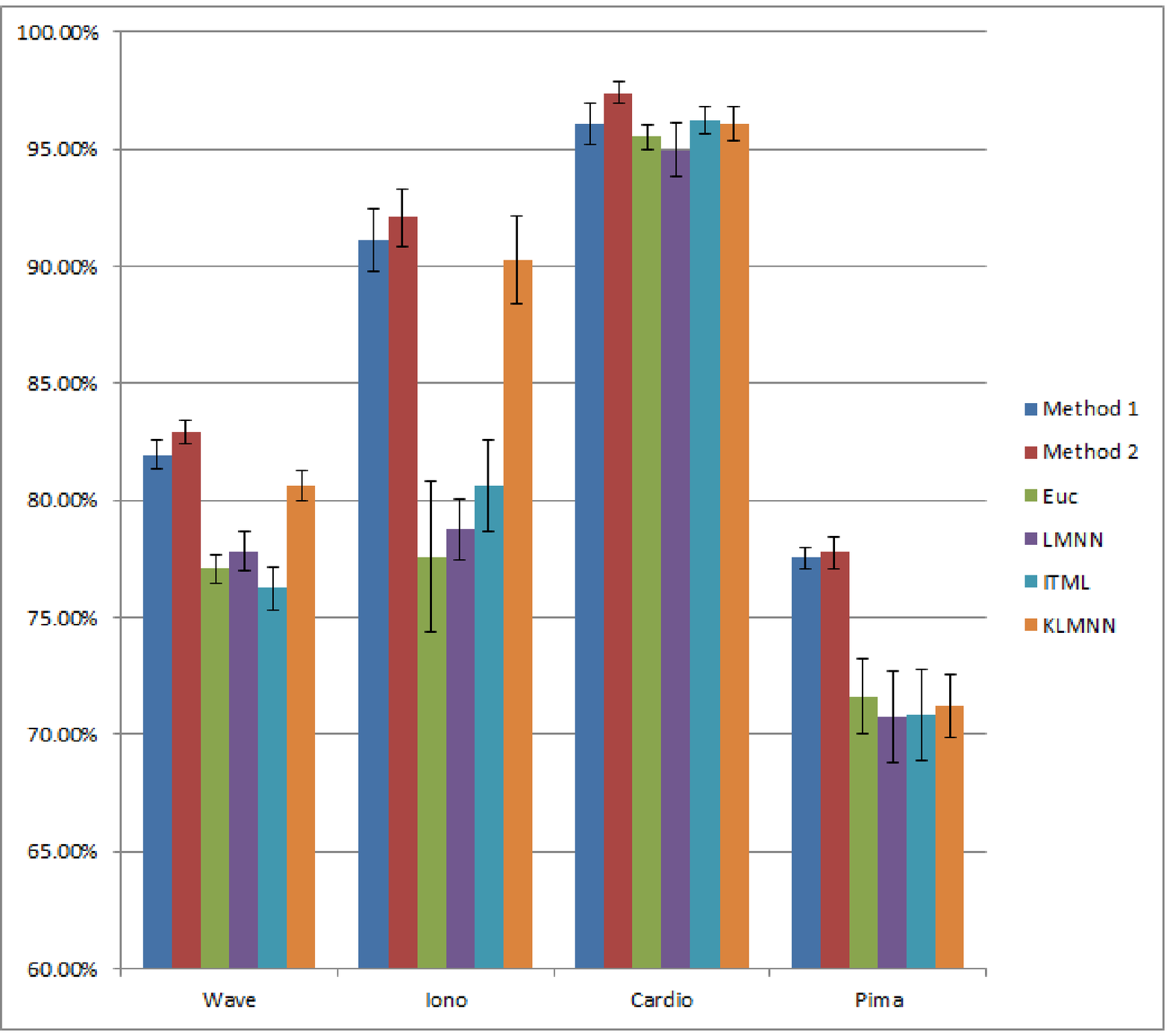}
		\label{fig:last_4_10_percent}}
\caption{Experimental results for $10$\% training data. Average classification performance over $20$ runs for each data set and each method is shown. Error bars indicate $95$\% confidence intervals.}
\label{fig:ten_percent}
\end{figure}

We first discuss the results in the case where we used only 10\% of the training data; they are depicted in \fref{fig:ten_percent}. Our first model with general kernel function $\boldsymbol{K}$ is named as ``Method 1'', and the second model with specified kernel function $\boldsymbol{K} = k \cdot \boldsymbol{B}$ is called ``Method 2''. For almost all datasets, we observe that all five \ac{DML} methods outperform the scheme involving no transformation of the original feature space (\ie, the output space coincided with the original feature space) and labeling samples via Euclidean-distance \ac{KNN} classification. This remarkable fact underlines the potential benefits of \ac{DML} methods. Moreover, we observe that kernelized methods usually outperform LMNN. This observation may partly justify the use of a nonlinear mapping for \ac{DML}. Furthermore, we observe from the figure that both of our methods typically outperform the other four approaches. More specifically, the proposed two models achieve the highest accuracy across all datasets with the only exception on the \textit{Vehicle} dataset, where ITML and KLMNN outperform slightly. It is worth mentioning that, for the \emph{Pima} data set, none of the other three \ac{DML} methods can enhance the performance compared to the baseline \ac{KNN} classification, while our methods achieve significant improvements.

\begin{figure}[ht]
\centering
	\subfloat[]{
		\includegraphics[width=8cm]{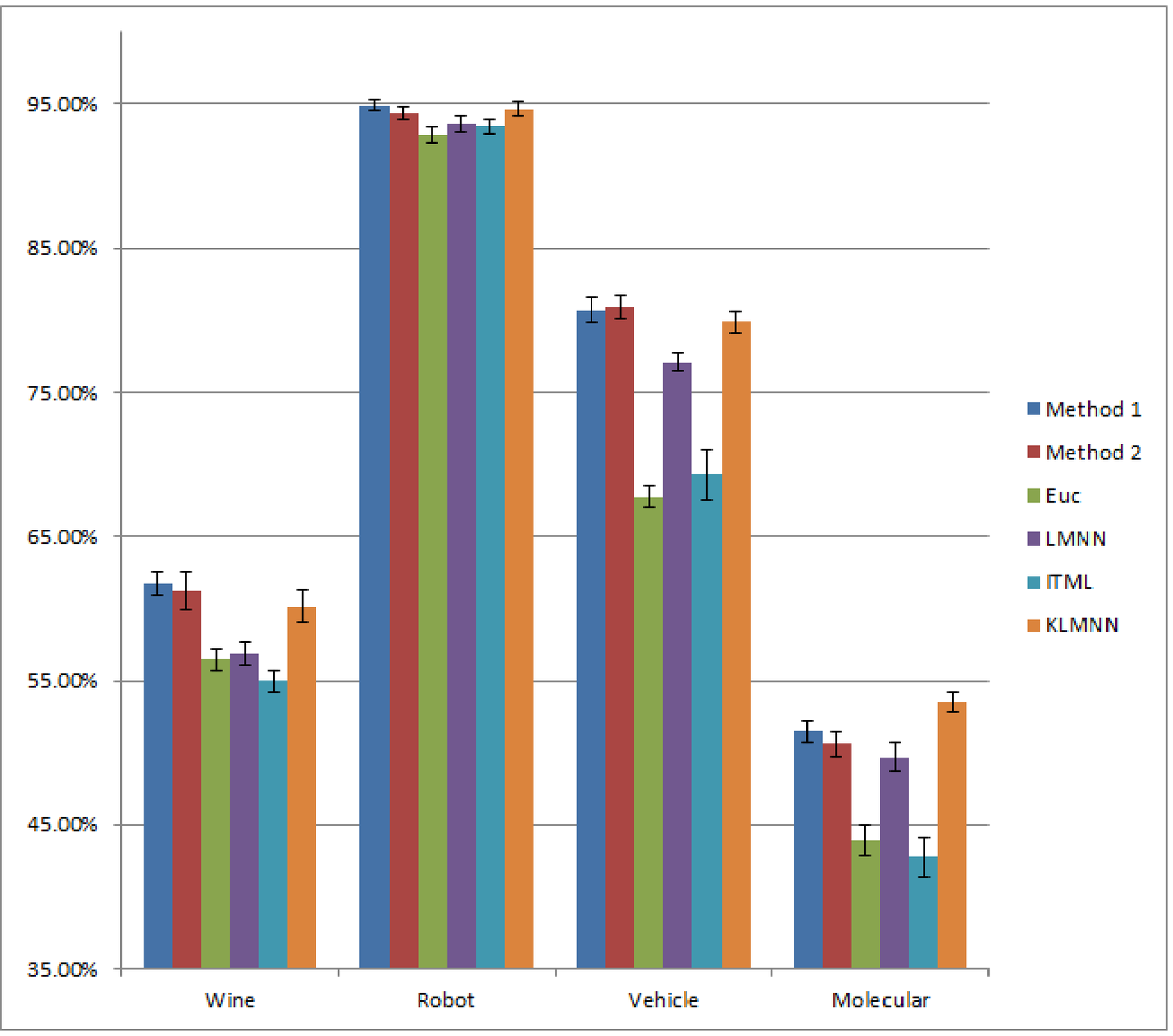}
		\label{fig:first_4_50_percent}}
	\subfloat[]{
		\includegraphics[width=8cm]{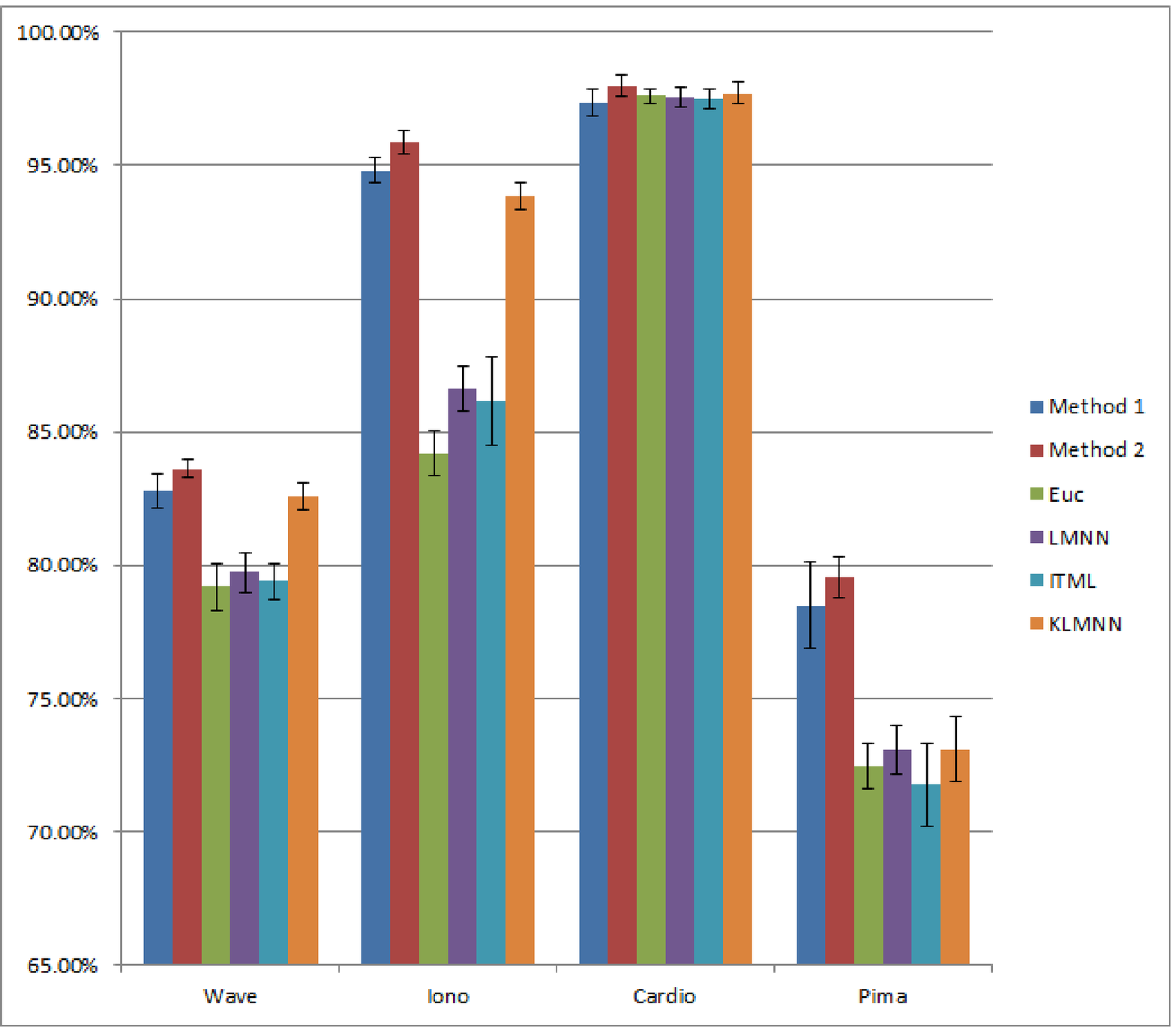}
		\label{fig:last_4_50_percent}}
\caption{Experimental results for $50$\% training data. Average classification performance over $20$ runs for each data set and each method is shown. Error bars indicate $95$\% confidence intervals.}
\label{fig:fifty_percent}
\end{figure}

Similar conclusions can be drawn regarding the results generated by using $50$\% of the training data. These results are depicted in \fref{fig:fifty_percent}. Our methods outperform all the other four methods for most datasets. An exception occurs for the \textit{Molecular} dataset, where KLMNN achieves higher performance than ours. In the case of \textit{Robot} and \textit{Cardio} datasets, all methods perform similarly well. The reason might be that, with enough data, all of the models can be trained well enough to achieve close to optimal performance. For the \textit{Pima} data set, again, our methods achieve much better results than all other four methods. It is also important to note that, for our Method 1, despite the relatively simple form of the matrix kernel function we opted for, the resulting model demonstrated very competitive classification accuracy across all datasets. One would likely expect even better performance, if a more sophisticated matrix kernel function is used.

For the sake of visualizing the distribution of the transformed \emph{Robot} data via our models in $2$ dimensions, we provide \fref{fig:robot_KPCA} and \fref{fig:robot_visualize}. Similar to \cite{Jain2010}, we compare the produced mappings of our methods to Kernel Principal Component Analysis (KPCA). KPCA's $2$-dimensional principal subspace was identified based on $10$\% of the available training data, \ie, $100$ training patterns, and the test points were projected onto that subspace. The same training samples were also used for training our two models, which used a Gaussian kernel function and a spread parameter value $\sigma$ that maximized \ac{KNN}'s classification accuracy.

\begin{figure}[ht]
\begin{center}
		\centering
		\includegraphics[width=3.2in]{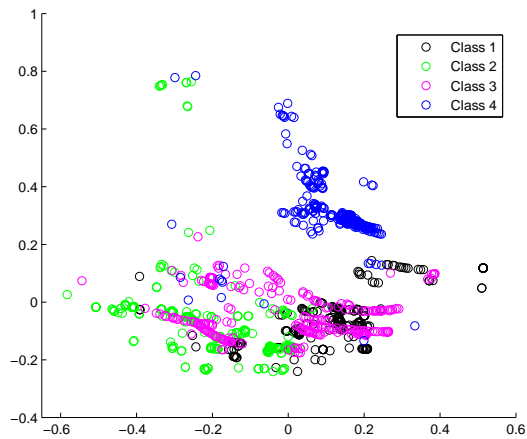}
		\caption{Visualization of the \textit{Robot} data set by applying KPCA.}
		\label{fig:robot_KPCA}
\end{center}
\end{figure}

\begin{figure}[ht]
\centering
	\subfloat[Method 1]{
		\includegraphics[width=8cm]{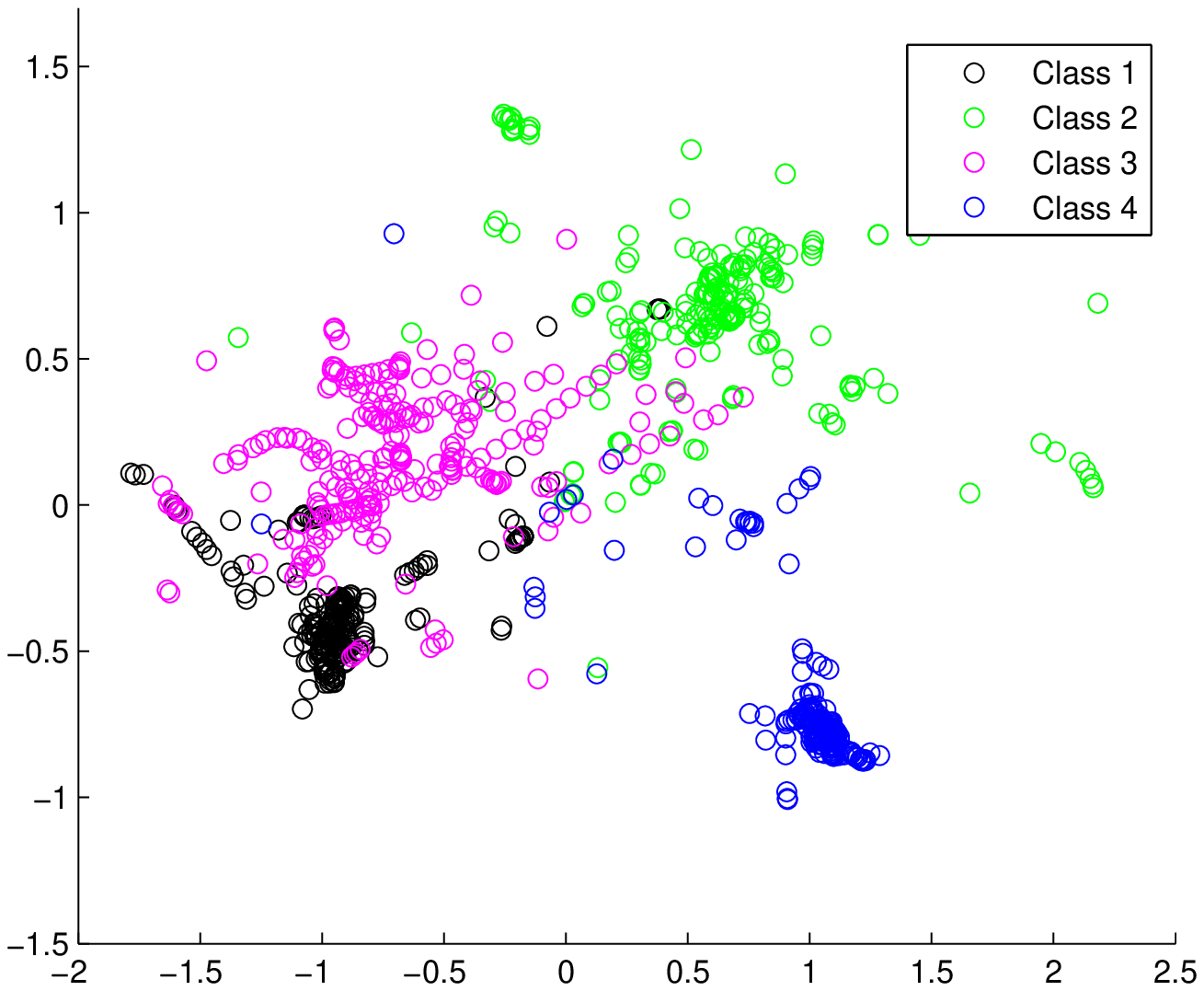}
		\label{fig:robot_general}}
	\subfloat[Method 2]{
		\includegraphics[width=8cm]{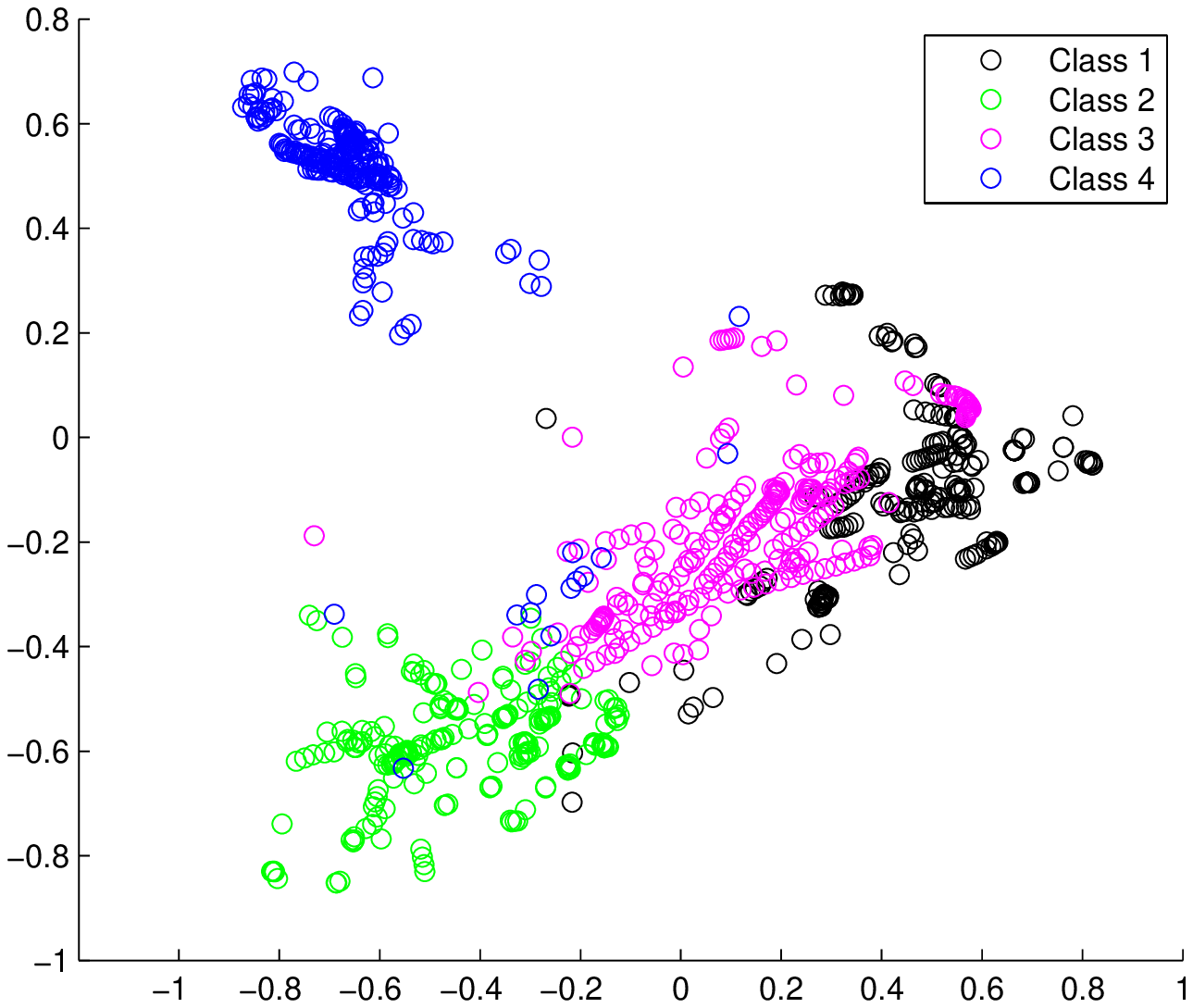}
		\label{fig:robot_B}}
\caption{Visualization of the \textit{Robot} data set by applying our methods.}
\label{fig:robot_visualize}
\end{figure}

From \fref{fig:robot_KPCA} we observe that KPCA's projection may only promote good discrimination between samples drawn from class $4$ versus the rest. On the other hand, in \fref{fig:robot_general} and \fref{fig:robot_B}, all four classes are reasonably well-clustered in the output space obtained by our two methods. This may explain why our methods are able to achieve high classification accuracy, even when only $10$\% of the available data are used for training.

\section{Conclusions}
\label{sec:Conclusions}

\acresetall

In this paper, we proposed two new kernel-based \ac{DML} methods, which rely on \acp{RKHS} of vector-valued functions. Via a mapping $f$, the two methods map data from their original space to an output space, whose dimension can be directly controlled. Subsequent distance measurements are performed in the output space via a Mahalanobis metric. The first proposed model uses a general matrix kernel function and, thus, provides significant flexibility in modeling the input-to-output space mapping. On the other hand, the second proposed method uses a more restricted matrix kernel function, but has the advantage of implicitly determining the Mahalanobis metric. Furthermore, its matrix kernel function can be learned from data. Unlike previous kernel-based approaches, the relevant $f$ mappings are explicit for both of our two methods. Combined with the fact that the output space dimensionality can be directly specified, the models can also be used for dimensionality reduction purposes, such as for visualizing the data in $2$ or $3$ dimensions. Experimental results on eight UCI benchmark data sets show that both of the proposed methods can achieve higher performance in comparison to other traditional and kernel-based \ac{DML} techniques.

\section*{Acknowledgements}

 C. Li acknowledges partial support from \ac{NSF} grant No. 0806931. Also, M. Georgiopoulos acknowledges partial support from \ac{NSF} grants No. 0525429, No. 0963146, No. 1200566 and No. 1161228. Any opinions, findings, and conclusions or recommendations expressed in this material are those of the authors and do not necessarily reflect the views of the \ac{NSF}.

\bibliographystyle{plainurl}
\bibliography{ArXiv2013paperA}

\begin{thebibliography}{10}

\bibitem{Bilenko2004}
Mikhail Bilenko, Sugato Basu, and Raymond~J. Mooney.
\newblock Integrating constraints and metric learning in semi-supervised
  clustering.
\newblock In {\em International Conference on Machine Learning}, 2004.
\newblock Available from: \url{http://doi.acm.org/10.1145/1015330.1015360}.

\bibitem{Boyd2004}
Stephen Boyd and Lieven Vandenberghe.
\newblock {\em Convex Optimization}.
\newblock Cambridge University Press, 2004.

\bibitem{Chatpatanasiri2010}
Ratthachat Chatpatanasiri, Teesid Korsrilabutr, Pasakorn Tangchanachaianan, and
  Boonserm Kijsirikul.
\newblock A new kernelization framework for mahalanobis distance learning
  algorithms.
\newblock {\em Neurocomputing}, 73:1570--1579, 2010.
\newblock Available from: \url{http://dx.doi.org/10.1016/j.neucom.2009.11.037}.

\bibitem{Davis2007}
Jason~V. Davis, Brian Kulis, Prateek Jain, and Inderjit~S. Dhillon.
\newblock Information-theoretic metric learning.
\newblock In {\em Proceedings of the 24th international conference on Machine
  learning}, ICML '07, pages 209--216, 2007.
\newblock Available from: \url{http://doi.acm.org/10.1145/1273496.1273523}.

\bibitem{Do2012}
Huyen Do, Alexandros Kalousis, Jun Wang, and Adam Woznica.
\newblock A metric learning perspective of svm: On the relation of \em{LMNN}
  and \em{SVM}.
\newblock In {\em JMLR W\&CP 5: Proceedings of Fifteenth International
  Conference on Artificial Intelligence and Statistics (AISTATS)}, volume~5,
  pages 308--317, 2012.
\newblock Available from:
  \url{http://jmlr.csail.mit.edu/proceedings/papers/v22/do12/do12.pdf}.

\bibitem{Goldberger2004}
Jacob Goldberger, Sam Roweis, Geoff Hinton, and Ruslan Salakhutdinov.
\newblock Neighbourhood components analysis.
\newblock In {\em Neural Information Processing Systems}, 2004.
\newblock Available from:
  \url{http://books.nips.cc/papers/files/nips17/NIPS2004_0121.pdf}.

\bibitem{Huang2009}
Kaizhu Huang, Yiming Ying, and Colin Campbell.
\newblock Gsml: A unified framework for sparse metric learning.
\newblock In {\em Data Mining, 2009. ICDM '09. Ninth IEEE International
  Conference on}, pages 189 --198, Dec. 2009.
\newblock Available from: \url{http://dx.doi.org/10.1109/ICDM.2009.22}.

\bibitem{Jain2010}
Prateek Jain and Brian Kulis.
\newblock Inductive regularized learning of kernel functions.
\newblock In {\em Neural Information Processing Systems}, 2010.
\newblock Available from:
  \url{http://books.nips.cc/papers/files/nips23/NIPS2010_0603.pdf}.

\bibitem{Jain2012}
Prateek Jain, Brian Kulis, Jason~V. Davis, and Inderjit~S. Dhillon.
\newblock Metric and kernel learning using a linear transformation.
\newblock {\em Journal of Machine Learning Research}, 13:519--547, 2012.

\bibitem{Kulis2009}
Brian Kulis, Suvrit Sra, Dhillon, and Inderjit.
\newblock Convex perturbations for scalable semidefinite programming.
\newblock In {\em JMLR W\&CP 5: Proceedings of Fifteenth International
  Conference on Artificial Intelligence and Statistics (AISTATS)}, volume~5,
  pages 296--303, 2012.
\newblock Available from:
  \url{http://jmlr.csail.mit.edu/proceedings/papers/v5/kulis09a/kulis09a.pdf}.

\bibitem{Lancaster1970}
Peter Lancaster.
\newblock Explicit solutions of linear matrix equations.
\newblock {\em SIAM Review}, 12:pp. 544--566, 1970.
\newblock Available from: \url{http://www.jstor.org/stable/2028490}.

\bibitem{Lu2009}
Zhengdong Lu, Prateek Jain, and Inderjit~S. Dhillon.
\newblock Geometry-aware metric learning.
\newblock In {\em Proceedings of the 26th Annual International Conference on
  Machine Learning}, ICML '09, pages 673--680, 2009.
\newblock Available from: \url{http://doi.acm.org/10.1145/1553374.1553461}.

\bibitem{Maesschalck2000}
R.~De Maesschalck, D.~Jouan-Rimbaud, and D.L. Massart.
\newblock The mahalanobis distance.
\newblock {\em Chemometrics and Intelligent Laboratory Systems}, 50(1):1 -- 18,
  2000.
\newblock Available from:
  \url{http://www.sciencedirect.com/science/article/pii/S0169743999000477},
  \href {http://dx.doi.org/10.1016/S0169-7439(99)00047-7}
  {\path{doi:10.1016/S0169-7439(99)00047-7}}.

\bibitem{Micchelli2005}
Charles~A. Micchelli and Massimiliano Pontil.
\newblock On learning vector-valued functions.
\newblock {\em Neural Computation}, 17:177--204, 2005.
\newblock Available from: \url{http://dx.doi.org/10.1162/0899766052530802}.

\bibitem{Nguyen2008}
Nam Nguyen and Yunsong Guo.
\newblock Metric learning: A support vector approach.
\newblock In {\em Proceedings of the European conference on Machine Learning
  and Knowledge Discovery in Databases - Part II}, ECML PKDD '08, pages
  125--136, Berlin, Heidelberg, 2008. Springer-Verlag.
\newblock Available from: \url{http://dx.doi.org/10.1007/978-3-540-87481-2_9},
  \href {http://dx.doi.org/10.1007/978-3-540-87481-2_9}
  {\path{doi:10.1007/978-3-540-87481-2_9}}.

\bibitem{Parameswaran2010}
Shibin Parameswaran and Kilian~Q. Weinberger.
\newblock Large margin multi-task metric learning.
\newblock In {\em Neural Information Processing Systems}, 2010.
\newblock Available from:
  \url{http://books.nips.cc/papers/files/nips23/NIPS2010_0510.pdf}.

\bibitem{Schultz2004}
Matthew Schultz and Thorsten Joachims.
\newblock Learning a distance metric from relative comparisons.
\newblock In {\em Neural Information Processing Systems}, 2004.
\newblock Available from:
  \url{http://books.nips.cc/papers/files/nips16/NIPS2003_AA06.pdf}.

\bibitem{Torresani2007}
Lorenzo Torresani and Kuang-chih Lee.
\newblock Large margin component analysis.
\newblock In {\em Neural Information Processing Systems}, 2007.
\newblock Available from:
  \url{http://books.nips.cc/papers/files/nips19/NIPS2006_0791.pdf}.

\bibitem{Tsang2003}
Ivor~W. Tsang and James~T. Kwok.
\newblock Distance metric learning with kernels.
\newblock In {\em Proceedings of International Conference on Artificial Neural
  Networks (ICANN)}, pages 126--129, 2003.

\bibitem{VanderMaaten2008}
Laurens Van~der Maaten and Geoffrey Hinton.
\newblock Visualizing data using t-sne.
\newblock {\em Journal of Machine Learning Research}, 9(2579-2605):85, 2008.

\bibitem{Wang2011}
Jun Wang, Huyen Do, Adam Woznica, and Alexandros Kalousis.
\newblock Metric learning with multiple kernels.
\newblock In {\em Neural Information Processing Systems}, 2011.
\newblock Available from:
  \url{http://books.nips.cc/papers/files/nips24/NIPS2011_0683.pdf}.

\bibitem{Weinberger2009}
Kilian~Q. Weinberger and Laurence~K. Saul.
\newblock Distance metric learning for large margin nearest neighbor
  classification.
\newblock {\em Journal of Machine Learning Research}, 10:207--244, 2009.
\newblock Available from:
  \url{http://dl.acm.org/citation.cfm?id=1577069.1577078}.

\bibitem{Xing2002}
Eric~P. Xing, Andrew~Y. Ng, Michael~I. Jordan, and Stuart Russell.
\newblock Distance metric learning, with application to clustering with
  side-information.
\newblock In {\em Neural Information Processing Systems}, 2002.
\newblock Available from:
  \url{http://books.nips.cc/papers/files/nips15/AA03.pdf}.

\bibitem{Ying2012}
Yiming Ying and Peng Li.
\newblock Distance metric learning with eigenvalue optimization.
\newblock {\em Journal of Machine Learning Research}, 13:1--26, 2012.
\newblock Available from:
  \url{http://dl.acm.org/citation.cfm?id=2188385.2188386}.

\bibitem{Zhang2010}
Yu~Zhang and Dit-Yan Yeung.
\newblock Transfer metric learning by learning task relationships.
\newblock In {\em Proceedings of the 16th ACM SIGKDD international conference
  on Knowledge discovery and data mining}, KDD '10, pages 1199--1208, 2010.
\newblock Available from: \url{http://doi.acm.org/10.1145/1835804.1835954}.

\end{thebibliography}

\end{document}